\newcommand{\xvec}{\mathbf{x}}
\newcommand{\yvec}{\mathbf{y}}
\newcommand{\vvec}{\mathbf{v}}
\newtheorem{lemma}{Lemma}
\newtheorem{thm}{Theorem}
\newtheorem{prop}{Proposition}
\newtheorem{fact}{Fact}
\newtheorem{definition}{Definition}
\newcommand{\remove}[1]{}
\newcommand{\mom}[1]{{\left\vert\kern-0.25ex\left\vert\kern-0.25ex\left\vert #1 \right\vert\kern-0.25ex\right\vert\kern-0.25ex\right\vert}}
\begin{document}

\title{The computational complexity of some explainable clustering problems}


\date{}

\author{ Eduardo Laber \\PUC-Rio, Brazil\\ {\tt laber@inf.puc-rio.br} }

\maketitle


\begin{abstract}

We study the computational complexity of some explainable clustering problems 
in the framework proposed by [Dasgupta et al., ICML 2020], where  explainability is achieved via axis-aligned decision trees.  
We consider the $k$-means, $k$-medians, $k$-centers  and
the spacing cost functions. We prove that the first three are hard to optimize while the latter
can be optimized in polynomial time.
\end{abstract}




\section{Introduction}
Machine learning models and algorithms have been used in a number of systems that take decisions that affect our lives. Thus, explainable methods are desirable so that people are able to have a better understanding of their behavior, which allows for comfortable use of these systems or, eventually, the questioning of their applicability 
\cite{ribeiro2016should}.

Recently, there has been some effort to devise explainable methods for unsupervised learning tasks, in particular, for clustering \cite{dasgupta2020explainable,bertsimas2020interpretable}. We investigate the framework discussed  by \cite{dasgupta2020explainable}, where  an explainable clustering 
is given by a partition,  induced by the leaves of an axis-aligned decision tree, that optimizes
some predefined objective function.


Figure \ref{fig:exp-clustering}
shows a decision tree that defines  a clustering for the {\tt Iris} dataset. The clustering 
has three groups, each of them corresponding to a leaf. The explanation of the group associated with the rightmost leaf is
{\tt Sepal Length >0.4} AND {\tt Petal Width < 0.5}.

\begin{figure}
\begin{center}
	 \includegraphics[scale=0.5]{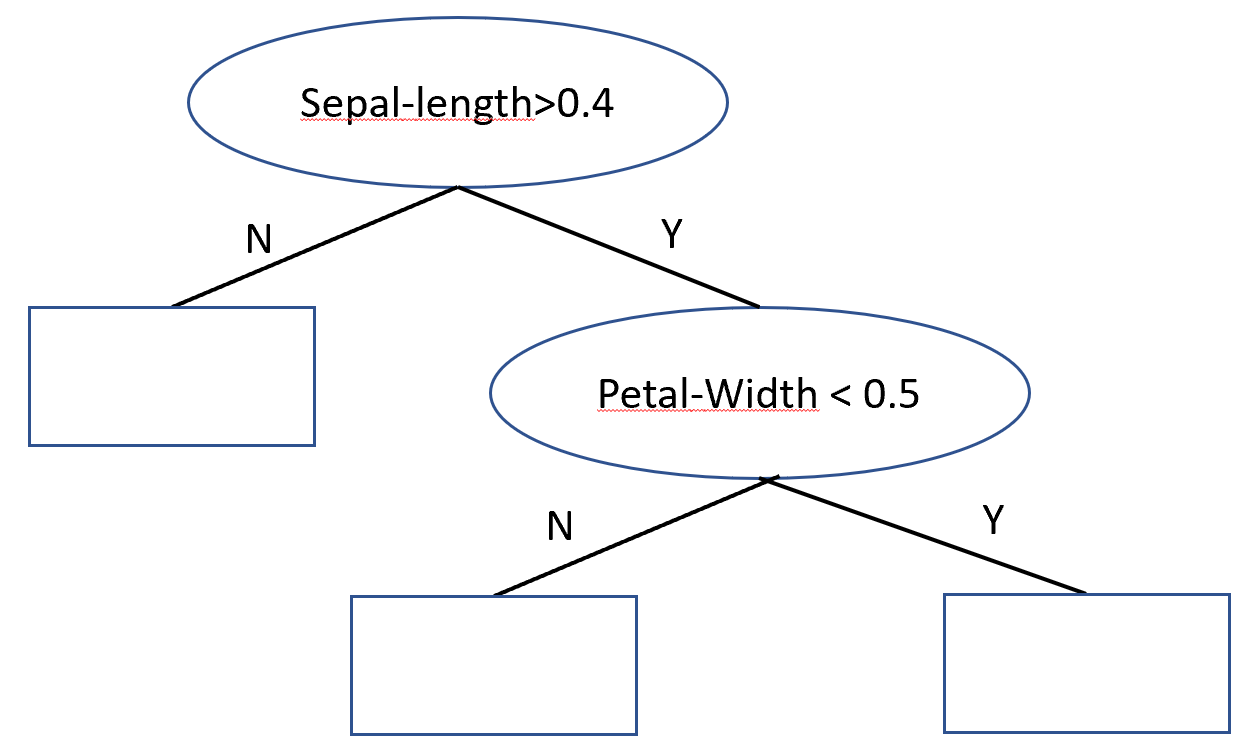}
\label{fig:exp-clustering}
\caption{An explainable clustering with 3 groups for the {\tt Iris} datasets}
	\end{center}

\end{figure}

\remove{ 
 
Figure \ref{fig:exp-clustering}
shows a clustering with three groups induced by a decision tree with $3$ leaves.
As an example, the blue cluster can be explained as the set of points  that satisfy
{\tt Feature 1} $\le 70$ and {\tt Feature 2} $> 40$.
Simple explanations as this one are usually not available
for the partitions produced by popular methods such as the Lloyd's algorithm for the $k$-means problem.

\begin{center}
		\hspace{40pt} \includegraphics[scale=0.5]{Exp-Clustering.png}
\label{fig:exp-clustering}
	\end{center}
 
 }

Following \cite{dasgupta2020explainable},
 a series of papers \cite{DBLP:conf/icml/LaberM21,DBLP:conf/icml/MakarychevS21,charikar2021near,esfandiari2021almost,gamlath2021nearly} provided algorithms, with provable
guarantees, to build
decision trees that induce explainable clustering.
Several cost functions  to guide the clustering construction  were investigated  as  the $k$-means, $k$-centers, $k$-medians and maximum-spacing.
Despite this active research,
the only work on the field that tackles    the computational complexity of building explainable clustering
is \cite{bandyapadhyay2021good}, where it was  proved that optimizing the  
$k$-means and the $k$-medians cost functions is NP-Complete. Here, we 
improve these results and also investigate the computational complexity
for both the $k$-centers and the spacing cost functions.

\remove{
\cite{dasgupta2020explainable} present a way of approximating solutions for the $k$-means and $k$-medians clustering problems using small decision trees -- i.e., allowing only cuts that are perpendicular to the planes, and limited by previous cuts, when defining the clusters' borders. They show that ``any tree-induced clustering must in general incur an $\Omega(\log k)$ approximation factor compared to the optimal clustering'', as well as ``an efficient algorithm that produces explainable clusters using $k$ leaves'' with a constant factor approximation for $k = 2$ and, for general $k \geq 2$, 
``an $O(k)$ approximation to the optimal k-medians and an
$O(k^2)$ approximation to the optimal k-means''. Here we expand on their work by analysing the approximation of $k$-centers clustering using decision trees.}

\subsection{Problem definition}
Let ${\cal X} $ be a finite set of  points in $ \mathbb{R}^d$.
We say that a decision tree is {\em standard} if each internal node $v$ is associated with a
test (cut), specified by a coordinate $i_v \in [d]$ and a real value $\theta_v$, that partitions the  points
in ${\cal X}$ that reach $v$ into two sets:
those  having the  coordinate  $i_v$  smaller than or equal to $\theta_v$ and
those having  it larger than $\theta_v$. The leaves of a standard
decision tree   induce a partition of  $ \mathbb{R}^d$ into axis-aligned boxes and,
naturally, a partition of ${\cal X} $ into  clusters.

Let  $k \ge 2$ be an integer.   The clustering problems considered here consist of finding a  partition
of ${\cal X}$ into $k$ groups, among those that can be induced by a standard decision tree with $k$ leaves,
 that optimizes a given objective function.
For the $k$-means, $k$-medians and $k$-centers cost functions, in addition to the partition, 
a representative $\mu(C) \in \mathbb{R}^d$ for each group $C$ must also
be output. 
 
For the $k$-means problem the objective (cost function) to be minimized is the Sum of the Squared Euclidean Distances (SSED) between each point $\xvec \in {\cal X}$ and the representative of the cluster
where $\xvec$ lies. Mathematically, the cost (SSED) of a partition 
${\cal C}=(C_1,\ldots,C_k)$ for ${\cal X}$ is given by
$$\sum_{i=1}^k \sum_{\xvec \in C_i} || \xvec-\mu(C_i) ||_2^2.$$




For the  $k$-medians problem the cost of a partition ${\cal C}=(C_1,\ldots,C_k)$ is given by 
$$\sum_{i=1}^k \sum_{\xvec \in C_i} || \xvec-\mu(C_i) ||_1.$$

The $k$-centers problem is  also a minimization problem;
its cost function for a partition ${\cal C}=(C_1,\ldots,C_k)$ is given by 
$$\max_{i=1,\ldots,k}  \max_{\xvec \in C_i} \{ || \xvec-\mu(C_i) ||_2 \} .$$




Let $dist:{\cal X} \times {\cal X} \mapsto \mathbb{R}^+$ be a distance function.
 The meximum-spacing problem consists of finding a partition with at least $k$ groups
 that has maximum spacing, where  the spacing
$sp({\cal C})$ of a partition  ${\cal C}$ is 
 defined as 
$$ sp({\cal C})= \min \{ dist(\xvec,\yvec) : \xvec \mbox{ and } \yvec \mbox{ lie in distinct groups of ${\cal C}$}
 \}$$
In contrast to the other criteria, the spacing is an inter-clustering criterion.

We note that an optimal solution of the unrestricted version
 of any of these problems, in which the decision tree constraint is not enforced, might be a partition that is hard to explain in terms of the input features.
Thus, the motivation for using standard decision trees.

For the sake of simplicity, throughout of this text, by explainable clustering
we mean a clustering that is obtained via decision trees.

\remove{\cite{dasgupta2020explainable} informally define the price of explainability as "the multiplicative blowup in k-means (or k-medians) cost that is inevitable if we force our final clustering to have an interpretable form." 
A broader and more formal definition of the price of explainability
$\rho({\cal P})$ for  a clustering problem ${\cal P}$, with a minimization objective function,
 may be as follows: }

\remove{ 
  Let ${\cal P}$ be a clustering problem, with a minimization objective function, that falls into the discussed framework.
The price of explainability  for ${\cal P}$ in our setting  is  defined as
$$ \max_{I } \left \{ \frac{OPT_{exp}(I)}{OPT_{unr}(I)} \right \}, $$
where $I$ runs over all instances of ${\cal P}$;
 $ OPT_{exp}(I)$ is the cost of an optimal explainable (via decision trees) clustering  
for instance $I$ and  $OPT_{unr}(I)$  is the cost of an optimal 
unrestricted clustering  for $I$.
If ${\cal P}$ has a maximization objective function then the price of explainability 
is defined as
$$ \max_{I } \left \{ \frac{OPT_{unr}(I)}{OPT_{exp}(I)} \right \}.$$
}

\subsection{Our contributions}

In Section \ref{sec:hardness}, 
we first show that the problem of building a
partition via decision trees that minimizes  the $k$-means cost function  does not admit an $(1+\epsilon)$-approximation in polynomial time, for some $\epsilon>0$, unless $P=NP$.
Then, we show that  analogous results hold for both the $k$-median and $k$-centers cost functions. 
Our results for both the $k$-means and $k$-medians are stronger than the NP-Hardness
result established recently by 
\cite{bandyapadhyay2021good} and they formally help to justify the quest for approximation algorithms and/or heuristics for these cost
functions.

In Section \ref{sec:complexity-max-spacing} we propose a polynomial time algorithm 
that produces an explainable clustering of maximum spacing. As far as we know,  this is the first
efficient method that produces optimal explainable clustering with respect to some well studied metric.





\subsection{Related work}


Our research is inspired by the recent work of \cite{dasgupta2020explainable},
where the problem of building  explainable clusterings, via standard decision trees, for both the $k$-means and the $k$-medians cost functions are studied. This paper  proposes algorithms with provable approximation bounds for building explainable clusterings. In addition, 
it investigates the price of explainability for
these cost functions, which is the unavoidable gap between the cost of the optimal explainable and
the optimal unconstrained
clustering. Among their results, they showed 
that the price of explainability for the $k$-means and $k$-median are respectively 
$O(k^2)$ and $O(k)$.

Their results were refined/improved by a series of 
 recent  papers  \cite{DBLP:conf/icml/LaberM21,DBLP:conf/icml/MakarychevS21,esfandiari2021almost,gamlath2021nearly,charikar2021near}.
Currently, the best upper bound  
for the $k$-medians is  $O(\log k \log \log k )$  \cite{DBLP:conf/icml/MakarychevS21,esfandiari2021almost} 
 while  for the $k$-means is   $O(k  \log k )$ \cite{esfandiari2021almost}.
The study of bounds that depend on the dimension $d$ was initiated in 
\cite{DBLP:conf/icml/LaberM21}, where the authors present an    $O(d \log k)$
upper bound for the $k$-medians and  an  $O(d k \log k)$ upper bound for the  $k$-means.
These bounds were improved to  $O(d \log^2 d) $ for the $k$-medians   \cite{esfandiari2021almost} and
$O( k^{1-2/d} poly (d, \log k))$ \cite{charikar2021near} for the $k$-means.

The price of explainability was also investigated for other cost functions.
In \cite{DBLP:conf/icml/LaberM21}, Laber and Murtinho considered   the $k$-centers and maximum-spacing cost functions.
In \cite{DBLP:conf/icml/MakarychevS21}, Makarychev and Shan considered the $k$-medoids problem ($k$-median with $\ell_2$ objective). Finally,   in \cite{gamlath2021nearly}, Gamlath et. al addressed
$\ell_p^p$ objectives.

The aforementioned papers, except \cite{DBLP:conf/icml/LaberM21} which also
presents experiments, are mainly theoretical. However, there are also a number of papers that
propose algorithms (without theoretical guarantees) for building explainable
clustering, among them we cite \cite{liu2000clustering,fraiman2013interpretable,bertsimas2020interpretable}.

The computational complexity of building explainable clustering via decision trees 
for  both the $k$-means and the $k$-medians problems 
is studied in \cite{bandyapadhyay2021good}.
It is shown that both problems  admit polynomial time algorithms
when either $k$ or $d$ is constant and they are	 NP-Complete
for arbitrary $k$ and $d$. In addition,
they show that an optimal explainable clustering cannot be found
in $f(k) \cdot |{\cal X}|^{o(k)}$ 
time for any computable function $f(·)$, unless Exponential Time Hypothesis (ETH) fails.

When we turn to  standard (non-explainable) clustering,  
the problems of optimizing 
the $k$-means, $k$-medians and $k$-centers cost functions are APX-Hard \cite{DBLP:journals/siamcomp/MegiddoS84,DBLP:conf/compgeom/AwasthiCKS15,DBLP:conf/soda/Cohen-AddadSL22} and  all of them 
admit polynomial time algorithms with constant approximation \cite{DBLP:journals/tcs/Gonzalez85,DBLP:journals/siamcomp/AhmadianNSW20,DBLP:journals/jcss/CharikarGTS02}.
With regards to the spacing cost function, 
the single-link algorithm, a very popular algorithm to build hierarchical clustering, 
produces a partition with maximum spacing
\cite[Chapter 4]{DBLP:books/daglib/0015106}.






\section{Hardness of $k$-means, $k$-medians and $k$-centers cost function}
\label{sec:hardness}
  
 \subsection{Background}
 
\label{sec:gappres}

 We start by recalling some basic definitions and facts that are useful for 
 studying the hardness   of optimization problems (see, e.g., \cite[chapter 29]{DBLP:books/daglib/0004338}).

Given a minimization problem $\mathbb{A}$ and a parameter $\epsilon>0$ 
we define the {\sc $\epsilon$-Gap}-$\mathbb{A}$ problem as the problem of 
deciding for an instance $I$ of $\mathbb{A}$ and a parameter $k$ 
whether: (i) $I$ admits a solution of 
value at most $k$; or (ii) every solution of $I$ have value at least $(1+\epsilon) k.$ In such a 
gap decision problem it is tacitly assumed that the instances are either of type (i)
or of type (ii).

\begin{fact} \label{fact:gap-inapprox}
If for a minimization problem $\mathbb{A}$ there exists $\epsilon > 0$ such that the 
{\sc $\epsilon$-Gap}-$\mathbb{A}$ problem is $NP$-hard, then no polynomial time
$(1+\epsilon)$-approximation algorithm exists for $\mathbb{A}$ unless $P = NP.$
\end{fact}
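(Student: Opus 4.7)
My plan is a black-box reduction by contradiction. Suppose that $\mathbb{A}$ admits a polynomial-time $(1+\epsilon)$-approximation algorithm $\mathcal{A}$. I would use $\mathcal{A}$ to decide {\sc $\epsilon$-Gap}-$\mathbb{A}$ in polynomial time; combined with the hypothesized $NP$-hardness of that gap problem, this forces $P=NP$, which is exactly the contradiction the statement is set up to derive.

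Concretely, given an instance $(I,k)$ of {\sc $\epsilon$-Gap}-$\mathbb{A}$, my decision procedure runs $\mathcal{A}$ on $I$ and inspects the cost $v$ of the returned solution. In a type-(i) instance one has $\OPT(I)\le k$, so the approximation guarantee gives $v\le(1+\epsilon)\,\OPT(I)\le(1+\epsilon)k$. In a type-(ii) instance every feasible solution, including the one $\mathcal{A}$ returns, has cost at least $(1+\epsilon)k$, so $v\ge(1+\epsilon)k$. Hence the rule ``declare type (i) if $v\le(1+\epsilon)k$, else type (ii)'' decides the gap problem correctly; the procedure runs in polynomial time, consisting of a single call to $\mathcal{A}$ and one numerical comparison.

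The argument is essentially textbook, so there is no deep obstacle; the only point worth flagging is the degenerate boundary $v=(1+\epsilon)k$, which is a priori compatible with both sides of the promise. For the concrete clustering objectives treated in Section~\ref{sec:hardness}, solution costs are rationals whose denominators can be bounded by a polynomial in the input size, so strict separation can always be enforced by thresholding slightly below $(1+\epsilon)k$; alternatively one may simply state the gap problem with strict inequality in case (ii) from the outset. Either way the reduction goes through and yields Fact~\ref{fact:gap-inapprox}.
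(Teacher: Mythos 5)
Your proof is correct: it is the standard gap-to-inapproximability argument, and the paper itself states this Fact without proof, simply citing a textbook for exactly this reasoning. Your observation about the boundary case $v=(1+\epsilon)k$ is a legitimate (minor) point about how the gap problem is phrased, and your fix is the standard one; nothing further is needed.
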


We will use the following definition of a gap-preserving reduction.
\begin{definition} \label{gap-reduction}
Let $\mathbb{A}, \mathbb{B}$ be minimization problems. A gap-preserving reduction from $\mathbb{A}$
to $\mathbb{B}$ is a polynomial time algorithm that, given an instance $x$ of $\mathbb{A}$ and a value $k$, produces 
an instance $y$ of $\mathbb{B}$ and a value $\kappa$ such that  there exist constants $\epsilon, \eta > 0$ for which 
\begin{enumerate}
\item if $OPT(x) \leq k$ then $OPT(y) \leq \kappa$;
\item if $OPT(x) > (1+\epsilon) k$ then $OPT(y) > (1+\eta) \kappa$; 
\end{enumerate}
\end{definition}

\begin{fact} 
Fix minimization problems  $\mathbb{A}, \mathbb{B}$. If there exists 
$\epsilon$ such that  the {\sc $\epsilon$-Gap}-$\mathbb{A}$ problem is $NP$-hard
and there exists a  gap-preserving reduction from $\mathbb{A}$
to $\mathbb{B}$ then there exists $\eta$ such that 
the {\sc $\eta$-Gap}-$\mathbb{B}$ problem is $NP$-hard
\end{fact}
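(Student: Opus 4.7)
The plan is essentially to unroll the definitions: the gap-preserving reduction serves directly as a Karp reduction from \textsc{$\epsilon$-Gap}-$\mathbb{A}$ to \textsc{$\eta$-Gap}-$\mathbb{B}$, where $\epsilon$ and $\eta$ are the constants supplied by the reduction.

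More concretely, I would argue as follows. Let $R$ be the polynomial-time gap-preserving reduction from $\mathbb{A}$ to $\mathbb{B}$, with associated constants $\epsilon, \eta > 0$ as in Definition \ref{gap-reduction}, and assume that \textsc{$\epsilon$-Gap}-$\mathbb{A}$ is $NP$-hard for the same $\epsilon$. I claim that $R$ itself is a polynomial-time many-one reduction from \textsc{$\epsilon$-Gap}-$\mathbb{A}$ to \textsc{$\eta$-Gap}-$\mathbb{B}$. Given an instance $(x, k)$ of \textsc{$\epsilon$-Gap}-$\mathbb{A}$, apply $R$ to obtain $(y, \kappa)$ in polynomial time. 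If $(x,k)$ is a YES-instance, then $OPT(x) \le k$, so by property~(1) of Definition \ref{gap-reduction} we have $OPT(y) \le \kappa$, so $(y,\kappa)$ is a YES-instance of \textsc{$\eta$-Gap}-$\mathbb{B}$. If instead $(x,k)$ is a NO-instance, then $OPT(x) > (1+\epsilon)k$, and property~(2) yields $OPT(y) > (1+\eta)\kappa$, so $(y,\kappa)$ is a NO-instance of \textsc{$\eta$-Gap}-$\mathbb{B}$. Since \textsc{$\epsilon$-Gap}-$\mathbb{A}$ is $NP$-hard and the reduction is polynomial, this transfers $NP$-hardness to \textsc{$\eta$-Gap}-$\mathbb{B}$.

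There is not really a technical obstacle here, since the fact is little more than a bookkeeping exercise. The only subtlety worth flagging explicitly is that the constants $\epsilon$ appearing in the gap hypothesis and in the reduction must be the same (or at least one must be at most the other in the right direction), since the inapproximability threshold guaranteed for $\mathbb{A}$ has to line up with the gap that the reduction is able to amplify into a gap for $\mathbb{B}$. Beyond noting this alignment, the proof is a direct verification that the promise structure of \textsc{$\epsilon$-Gap}-$\mathbb{A}$ is mapped into the promise structure of \textsc{$\eta$-Gap}-$\mathbb{B}$, which is exactly what the two conditions in Definition \ref{gap-reduction} are designed to guarantee.
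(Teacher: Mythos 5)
Your proof is correct. The paper states this Fact without any proof (it is treated as standard bookkeeping), and your argument---that the gap-preserving reduction is itself a polynomial-time many-one reduction between the two promise problems, mapping YES-instances to YES-instances via property (1) and NO-instances to NO-instances via property (2), with the caveat that the $\epsilon$ in the hardness hypothesis must match the $\epsilon$ of the reduction---is exactly the intended argument.
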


 We will now specialize the above definitions for  restricted variants of 
 the problem of finding a minimum vertex cover in a graph and for our 
 clustering problems.

\begin{definition}
For every $\epsilon > 0$, the
{\sc $\epsilon$-Gap-MinVC-B-TF}
 (gap) decision problem is defined as follows:
given a triangle-free graph $G = (V, E)$, with bounded degree,   and an integer $k$, decide
whether $G$  has a vertex cover of size $k$ or all vertex covers of $G$ 
have size at least $ k (1+\epsilon)$.
\end{definition}

The {\sc $\epsilon$-Gap-MinVC-3B-TF} (gap) decision problem
has a similar definition, the only differences is that, in addition of being triangle-free, the graphs
are required to be  3-bounded, that is, all of its vertexes have degree at most 3.

The NP-Hardness of {\sc $\epsilon$-Gap-MinVC-B-TF} and 
 {\sc $\epsilon$-Gap-MinVC-3B-TF} were established in \cite{DBLP:conf/compgeom/AwasthiCKS15} and \cite{DBLP:journals/tit/CicaleseL21}, 
 respectively.

\begin{definition}
For every $\eta > 0$, the
{\sc $\eta$-Gap-Explainable-$k$means}
(gap) decision problem is defined as follows:
given a set of points ${\cal X}$, an integer $k$, and a value $\kappa$,  decide
whether there exists an explainable $k$-clustering ${\cal C} = (C_1, \dots C_k)$ of the points in ${\cal X}$  such that the
$k$-means cost of  ${\cal C}$ is  at most $\kappa$ or 
for each explainable $k$-clustering ${\cal C}$ of ${\cal X}$ it holds that 
the $k$-means
cost of  ${\cal C}$ is at least $ (1+\eta)\kappa.$
\end{definition}
The {\sc $\eta$-Gap-Explainable-kmedians}
 and {\sc $\eta$-Gap-Explainable-kcenters} decision problems
 are analogously defined.

To prove the hardness for the $k$-means 
we use a gap-preserving reduction from the {\sc $\epsilon$-Gap-MinVC-B-TF}
decision problem.
To handle both the  $k$-centers and $k$-medians,  we 
use 
 the {\sc $\epsilon$-Gap-MinVC-3B-TF} decision problem.

Our  reductions have some common  ingredients  that we explain here.
For all of them, 
given a graph $G=(V,E)$, where $V=\{1,\ldots,n\}$, we build an
instance of the clustering problem under consideration by mapping  
 every edge $e$ in $E$  onto
a point $\vvec^e=(v^e_1,\ldots,v^e_{n})$ in $\{0,1\}^n$ where  $v^e_i=1$  if vertex $i$ is incident on $e$ and $v^e_i=0$
otherwise. This is exactly the mapping proposed in \cite{DBLP:conf/compgeom/AwasthiCKS15}
to establish that the (standard) $k$-means problem is APX-Hard.
We use ${\cal X}_G:=\{v^e|e \in E\}$ to denote the input
of the resulting clustering instance.

Let $S=\{i_1,i_2,\ldots,i_k\}$ be a cover  of size $k$ for  $G$, where each $i_j$ is an integer
in $[n]$ and  $i_j < i_{j+1}$.
We define  ${\cal C}_S=(E_1,\ldots,E_k)$ as the $k$-clustering induced by $S$ on  the points in 
${\cal X}_G$,
where
 the group $E_j$ includes all 
points $\vvec$  that simultaneously satisfy: its component
$i_j$ is $1$  and  its component $i_{j'}$, for $j'<j$,
is  0.  

\begin{prop}
The  clustering ${\cal C}_S$ is explainable.
\end{prop}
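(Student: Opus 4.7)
The natural plan is to exhibit an explicit standard decision tree with exactly $k$ leaves whose induced partition of $\mathcal{X}_G$ coincides with $\mathcal{C}_S$. I would build a ``right-branching'' path-like tree: the root tests coordinate $i_1$ against the threshold $1/2$; the ``$>1/2$'' child is declared leaf $E_1$, and the ``$\le 1/2$'' child becomes another internal node that tests $i_2$, again sending the ``$>1/2$'' side to leaf $E_2$ and the ``$\le 1/2$'' side to the next internal node, and so on. After $k-1$ internal nodes (testing $i_1, i_2, \ldots, i_{k-1}$), the remaining ``$\le 1/2$'' branch is declared leaf $E_k$. This yields a standard decision tree with $k-1$ internal nodes and exactly $k$ leaves, as required.

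Next, I would verify that the partition induced by these leaves coincides with $\mathcal{C}_S$. For each $j < k$, any point $\mathbf{v}^e$ routed to leaf $E_j$ by construction satisfies $v^e_{i_{j'}} = 0$ for every $j' < j$ and $v^e_{i_j} = 1$, which matches the definition of $E_j$ verbatim. Conversely, any point fitting that description is routed to that leaf, so the correspondence is exact for $j < k$.

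The one step that requires the hypothesis on $S$ is the last leaf: by construction, leaf $E_k$ receives exactly those points with $v^e_{i_1} = \cdots = v^e_{i_{k-1}} = 0$. The definition of $E_k$ additionally demands $v^e_{i_k} = 1$. Here I would invoke the fact that $S$ is a vertex cover of $G$: any edge $e \in E$ is incident to at least one vertex in $S$, so the point $\mathbf{v}^e$ has $v^e_{i_j} = 1$ for some $j \in [k]$. If $v^e_{i_1} = \cdots = v^e_{i_{k-1}} = 0$, then necessarily $v^e_{i_k} = 1$, so the leaf's content agrees with $E_k$ and every point of $\mathcal{X}_G$ is indeed routed to exactly one of the $k$ leaves.

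I do not foresee a serious obstacle; the only nontrivial ingredient is the use of the vertex-cover property to close the case of the last leaf. Everything else is a direct unfolding of the definitions of a standard decision tree and of the groups $E_j$.
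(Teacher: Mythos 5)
Your construction is exactly the one the paper uses: a path-shaped tree with $k-1$ internal nodes, the $j$-th applying the cut $(i_j,1/2)$, whose positive branch is leaf $E_j$. The paper's proof is just a two-line sketch of this; your added verification (including the use of the vertex-cover property to identify the last leaf with $E_k$) is correct and fills in the details the paper leaves implicit.
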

\begin{proof}
${\cal C}_S$ is the clustering induced  by 
a decision tree with $k-1$  internal nodes, with exactly one internal node per level.
The internal node of level $j$ is associated with cut
$(i_j, 1/2)$. 
\end{proof}

\subsection{Hardness of $k$-means cost function}
\label{sec:complexity}
 
We prove that the problem of finding an explainable clustering
with minimum $k$-means cost function is hard to approximate.
The reduction employed here is the one used by \cite{DBLP:conf/compgeom/AwasthiCKS15} to show that 
it is hard to find an $(1+\epsilon)$-approximation
for the $k$-means cost function. 
The  extra ingredient in our  proof 
is the construction of an explainable clustering ${\cal C}_S$
from a vertex cover $S$
that was described in the previous section.

\begin{thm} \label{thm:no_approx}
The problem of building an explainable clustering, via decision trees, that minimizes the $k-$means cost function does not admit
an $(1+\epsilon)$-approximation, for some $\epsilon >0$, in
polynomial time unless $P=NP$.
\end{thm}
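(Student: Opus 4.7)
The plan is to give a gap-preserving reduction from $\epsilon$-Gap-MinVC-B-TF (known to be NP-hard by \cite{DBLP:conf/compgeom/AwasthiCKS15}) to $\eta$-Gap-Explainable-$k$means for some constant $\eta > 0$; combining this with Fact 2 and Fact 1 yields the theorem. Given an instance $(G,k)$ with $|E(G)| = m$, the reduction outputs $({\cal X}_G, k)$ together with the threshold $\kappa := m - k$. This is precisely the point-set construction used in \cite{DBLP:conf/compgeom/AwasthiCKS15} to establish APX-hardness of the (unrestricted) $k$-means problem, so only the completeness direction really needs new work: the soundness direction will follow from \cite{DBLP:conf/compgeom/AwasthiCKS15}'s lower bound, which holds for \emph{every} $k$-clustering and therefore in particular for explainable ones.

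For completeness, suppose $G$ admits a vertex cover $S$ of size $k' \le k$; I may assume $S$ is minimum, so every $v \in S$ covers at least one edge not covered by the other vertices of $S$, and hence every cluster in ${\cal C}_S$ is non-empty. A short computation with $0/1$ edge-vectors shows that, when the centroid is used as the representative, each star-shaped cluster $E_j$ of size $s_j$ has SSED cost exactly $s_j - 1$; thus ${\cal C}_S$ has total cost $\sum_j (s_j - 1) = m - k'$, and Proposition 1 certifies that it is explainable. If $k' < k$, I refine the decision tree by performing $k - k'$ single-edge splits: each such split picks a leaf, a coordinate $u$ incident to some edge of that leaf but distinct from the leaf's center vertex $v$, and cuts on $u$ at threshold $1/2$; the resulting singleton leaf costs $0$ and the residual star leaf's cost drops by exactly $1$. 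After $k - k'$ splits the tree has $k$ leaves and cost exactly $m - k = \kappa$.

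For soundness, suppose every vertex cover of $G$ has size at least $(1+\epsilon)k$. I'd invoke the lower bound of \cite{DBLP:conf/compgeom/AwasthiCKS15} directly: in this case every $k$-clustering of ${\cal X}_G$ (explainable or not) has $k$-means cost at least $(1+\eta)\kappa$ for a constant $\eta = \eta(\epsilon, \Delta) > 0$, where $\Delta$ bounds the maximum degree of $G$. Because explainable $k$-clusterings form a subclass of arbitrary $k$-clusterings, the bound transfers verbatim. Constantness of $\eta$ uses bounded degree: any vertex cover has size at least $m/\Delta$, so $k = \Theta(m)$ and the ratio $k/(m-k)$ is bounded below by a constant, making $1 + c\epsilon k/(m-k) \ge 1 + \eta$ for a constant $\eta > 0$ independent of the instance.

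The main obstacle in my plan is the quantitative soundness bound: one must argue that any cluster whose edges do not all share a common vertex pays SSED excess proportional to the shortfall in single-vertex edge-coverage, and that no clever choice of the $k$ clusters can beat the per-cluster star bound on average by enough to hide a vertex cover of size near $k$. This is precisely the technical crux of the analysis in \cite{DBLP:conf/compgeom/AwasthiCKS15}, which exploits triangle-freeness to rule out nearly-star clusters that are not actually stars. The reduction itself and the completeness calculation are routine once Proposition 1 is available, so the genuine new ingredient of the theorem is the identification that the clustering ${\cal C}_S$ derived from a vertex cover is explainable.
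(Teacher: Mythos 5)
Your proposal is correct and follows essentially the same route as the paper: the same point-set $\mathcal{X}_G$ and threshold $\kappa = |E|-k$, the same completeness computation showing each star cluster of size $s_j$ costs $s_j-1$ with Proposition 1 certifying explainability, and the same soundness step of importing the lower bound of Awasthi et al.\ verbatim since explainable clusterings are a subclass of all clusterings. Your handling of covers of size $k' < k$ via extra single-edge splits and your explicit bounded-degree argument that the additive $\Omega(\epsilon)k$ gap is a constant multiplicative factor of $\kappa$ are minor elaborations of details the paper leaves implicit.
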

\begin{proof}
\label{sec:proof_thm}
Let $G$ be a triangle-free graph that satisfies one of the following 
cases: (i) $G$  has a vertex cover of size $k$ or (ii)  all vertex covers of $G$ 
have size $> k (1+\epsilon)$.

First, we consider the case where $G$ has a vertex cover $S=\{i_1,i_2,\ldots,i_k\}$ of size $k$.
We show  that, in this case, the cost of   ${\cal C}_S=(E_1,\ldots,E_k)$ is at most $|E|-k$.
 Let us consider the mean of the points in $E_j$ as the representative of $E_j$, that is, a point 
  that has $1$ at coordinate $i_j$ and 
$1/|E_j|$ in the remaining $|E_j|$ coordinates
with non-zero values.  
The squared distance of each point in $E_j$ to its representative is
given by 
\begin{equation} 
\label{eq:distance}
\left (1- \frac{1}{|E_j|}\right )^2 + (|E_j|-1) \times  \left (\frac{1}{|E_j|}\right)^2 =
  1- \frac{1}{|E_j|}  
\end{equation}

Thus, $E_j$ contributes to the
total cost with $ |E_j|-1$.
The cost of the clustering  ${\cal C}_S$ is, then, given by
$$ \sum_{j=1}^k |E_j|-1=|E|-k $$ 

Now, it remains to argue that if
the minimum vertex cover for $G$ has size at least $(1+\epsilon)k$ then 
every explainable clustering for the corresponding instance has cost at least $|E|- (1-\Omega(\epsilon))k$. This follows from 
\cite{DBLP:conf/compgeom/AwasthiCKS15}, as in this case every clustering (and, in particular, every explainable one) has cost at least $|E|- (1-\Omega(\epsilon))k$.
 
We have concluded a gap preserving reduction from 
{\sc $\epsilon$-Gap-MinVC-B-TF} to  
{\sc $\eta$-Gap-Explainable-$k$means}.
\end{proof}

\subsection{Hardness of $k$-medians cost function}
\label{sec:complexity-kmedians}
We prove that the problem of finding an explainable clustering
with minimum $k$-medians cost function is hard to approximate.
We show a gap preserving reduction from the {\sc $\epsilon$-Gap-MinVC-3B-TF}
problem to the {\sc $\eta$-Gap-Explainable-kmedians} problem.

The following well-known fact will be useful.

\begin{fact}
Let $C$ be a set of points in $ \mathbb{R}^d$ and let  $\mu(C) \in  \mathbb{R}^d$ be the point for which 
$$ \sum_{ \mathbf{x} \in C} || \mathbf{x}  - \mu(C)  ||_1 $$
is minimum. 

Then, for each $i \in [d]$, the value of coordinate $i$ of point  $\mu(C)$  is the median of the values of  the
 points in $C$
on coordinate $i$.
\end{fact}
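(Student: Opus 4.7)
The plan is to exploit the coordinate-wise separability of the $\ell_1$ norm. Writing $\mu = (\mu_1, \ldots, \mu_d)$ and $\xvec = (x_1, \ldots, x_d)$, we have $\|\xvec - \mu\|_1 = \sum_{i=1}^d |x_i - \mu_i|$, so
$$\sum_{\xvec \in C} \|\xvec - \mu\|_1 = \sum_{i=1}^d \left(\sum_{\xvec \in C} |x_i - \mu_i|\right).$$
Each inner sum depends only on the single coordinate $\mu_i$, so minimizing the total cost is equivalent to minimizing each inner sum independently. The multivariate claim therefore reduces to the one-dimensional assertion: for any real values $a_1, \ldots, a_n$, the function $f(m) = \sum_{j=1}^n |a_j - m|$ is minimized exactly when $m$ is a median of $\{a_1, \ldots, a_n\}$.

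To establish the one-dimensional fact, I would first observe that $f$ is a finite sum of convex piecewise-linear functions, hence itself convex and piecewise-linear with slope changes only at the $a_j$'s. At any $m$ that avoids all the $a_j$'s, the derivative is
$$f'(m) = |\{j : a_j < m\}| - |\{j : a_j > m\}|,$$
since differentiating $|a_j - m|$ at such $m$ yields $-1$ if $a_j > m$ and $+1$ if $a_j < m$. This derivative is non-positive as long as the count of $a_j$'s strictly below $m$ does not exceed the count strictly above, and becomes non-negative once the reverse holds. By the convexity of $f$, any $m$ where the slope transitions from non-positive to non-negative is a global minimizer, and these transition points are exactly the medians of $\{a_1, \ldots, a_n\}$.

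The only subtlety, and the closest thing to an obstacle, lies in the boundary cases: when $n$ is even or several $a_j$'s coincide, the median is an interval and $f$ fails to be differentiable at every $a_j$. Handling these cases cleanly requires working with left and right derivatives (or, equivalently, with the subdifferential of $f$) in place of ordinary derivatives. The same slope-counting argument then shows that $0$ belongs to the subdifferential of $f$ at every point of the median interval, confirming optimality. Reassembling the coordinate-wise minimizations yields the claim: for each $i \in [d]$, the $i$-th coordinate of $\mu(C)$ equals the median of the values $\{x_i : \xvec \in C\}$.
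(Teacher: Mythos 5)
Your proof is correct and is the standard argument: separate the $\ell_1$ cost coordinate-by-coordinate, then minimize each one-dimensional sum of absolute deviations via convexity and a slope-counting (subdifferential) argument, with appropriate care for the case where the median is an interval. The paper itself offers no proof of this statement --- it is invoked as a well-known fact --- so your write-up simply supplies the omitted justification, and it does so correctly; the one point worth making explicit is that when $|C|$ is even the minimizer need not be unique, so the fact's phrase ``the median'' should be read as ``a median,'' which is all the paper's subsequent lemma actually requires.
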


The following lemma will be also useful.

\begin{lemma}
Let $G$ be a 3-bounded  triangle free graph and let 
let $C \subseteq {\cal X}_G$  be a group of points corresponding to  $p$ edges of $G$.
We have that: (i) if   $C$ is a star then its $k$-medians cost is $p$ and (ii) if $C$  is not a star then its 
$k$-medians cost is at least
$(4/3) p$.
\end{lemma}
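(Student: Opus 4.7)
The plan is to express the $k$-medians cost of $C$ as a sum of coordinate-wise contributions that depend only on the degree sequence of the subgraph $H$ of $G$ formed by the $p$ edges corresponding to $C$, and then carry out a case analysis on $p$. The only tight case will be $p = 3$, which I expect to be the main obstacle.

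First I would invoke the fact recalled just before the lemma. Every point of $C$ lies in $\{0,1\}^n$, so at each coordinate $i$ the optimal $\ell_1$ center can be taken to be $0$ or $1$ (whichever is the majority among points of $C$), and the cost contribution at coordinate $i$ is $\min(a_i, p - a_i)$, where $a_i$ denotes the degree of vertex $i$ in $H$. Hence
$$\mathrm{cost}(C) \;=\; \sum_{i \in V(H)} \min(a_i,\, p - a_i),$$
subject to the handshake identity $\sum_i a_i = 2p$ and, because $G$ is $3$-bounded, the per-vertex bound $a_i \le 3$. Part~(i) now follows directly: a star $K_{1,p}$ has one center with $a_i = p$ contributing $\min(p, 0) = 0$, and $p$ leaves with $a_i = 1$ each contributing $1$, totalling $p$.

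For Part~(ii), I would split by $p$. The easy regime is $p \ge 6$, where each $a_i \le 3 \le p - a_i$, so every term equals $a_i$ and $\mathrm{cost}(C) = 2p \ge \tfrac{4}{3}p$. For $p = 2$, non-star forces $H = 2K_2$ and the cost is $4$. For $p = 4$ or $p = 5$, writing $n_j$ for the number of vertices of degree $j$ in $H$ and combining with handshake rewrites the cost as $\mathrm{cost}(C) = 2p - c\, n_3$ with $c = 2$ and $c = 1$ respectively; a crude upper bound on $n_3$ (for $p = 5$ directly from handshake; for $p = 4$ via the observation that two vertices of degree $3$ account for six incidences and hence force more than four edges in a $3$-bounded simple graph) then gives $\mathrm{cost}(C) > \tfrac{4}{3}p$ with room to spare.

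The case $p = 3$ is where the bound is tight and where both hypotheses on $G$ are genuinely used. Here the cost simplifies to $n_1 + n_2$, since a degree-$3$ vertex contributes $\min(3, 0) = 0$. Enumerating the degree sequences compatible with handshake and realisable as simple graphs with $p = 3$ edges, the only ones that would give $n_1 + n_2 < 4$ are $(n_1, n_2, n_3) = (3, 0, 1)$, which is exactly the star $K_{1,3}$ and is killed by the non-star hypothesis, and $(0, 3, 0)$, which is exactly the triangle $C_3$ and is killed by triangle-freeness of $G$. The remaining feasible sequences $(6, 0, 0)$, $(4, 1, 0)$, $(2, 2, 0)$ yield $n_1 + n_2 \in \{6, 5, 4\}$, the minimum $4 = \tfrac{4}{3}\cdot 3$ being attained precisely by the path $P_4$.
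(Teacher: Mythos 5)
Your proposal is correct and follows essentially the same route as the paper: both reduce the cost to the coordinate-wise sum $\sum_i \min(d_i,\, p-d_i)$ over the degree sequence of the subgraph spanned by $C$, handle the star case directly, and then split on $p$ (with $p\ge 6$ immediate from the degree bound and the small values of $p$ settled by bounding the number of degree-$3$ vertices and, for $p=3$, using triangle-freeness to exclude the only dangerous degree sequence besides the star). Your bookkeeping via the counts $n_j$ and the handshake identity is a slightly cleaner packaging of the paper's explicit case enumeration, but the argument is the same.
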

\begin{proof}
From the previous fact, the representative  of $C$ that yields to the  minimum $k$-medians cost is a point in $\{0,1\}^n$, where the coordinate $i$ has value 1 
if and only if the number of edges
that touch vertex $i$ is larger than $p/2$.  
Thus, the cost of a cluster $C$ is given by
$$\sum_{i=1}^n  \min \{ p-d_C(i), d_C(i)\},$$   
where $d_C(i)$ is the number of edges that touch vertex $i$ in $C$.

If $C$ is a star centred on vertex $j$ then 
$ \min \{ p-d_C(j), d_C(j)\}=0$ and $ \min \{ p-d_C(i), d_C(i)\}=1$
for the other vertexes $i$ in the star. Thus,
$$\sum_{i=1}^n \min \{ p-d_C(i), d_C(i)\} = \sum_{i\ne j} 1 = p  $$   

If $C$ is not a star then we have some cases:

\medskip

Case 1) $d_C(i) \le p/2$ for all $i$. We have 
$$\sum_{i=1}^n  \min \{ p-d_C(i) , d_C(i)\} =\sum_{i=1}^n  d_C(i)=2p$$   

Note that the above case covers the case $p \ge 6$ since the maximum degree in $G$ is at most 3.
\medskip

Case 2)  $p= 5$ and $d_C(j) =3$ for exactly one $j$. 
 We  have  
$$\sum_{i=1}^n  \min \{ p-d_C(i), d_C(i)\} =d_C(j)-1+ \sum_{i \ne j}  d_C(i)=2p-1=9=1.8p $$ 

\medskip

Case 3)  $p= 5$ and $d_C(j) =d_C(j')=3$ for exactly two values  $j$ and $j'$. 
 We have
$$\sum_{i=1}^n  \min \{p-d_C(i), d_C(i)\} =4+ \sum_{i \notin \{j,j'\}}  d_C(i)=2p-2=8=1.6p $$ 

Note that we cannot have 3 vertexes with degree 3 and $p=5$.
\medskip

Case 4)  $p= 4$ and $d_C(j) = 3$ for some $j$.
We must have exactly one $j$ with $d_C(j) = 3$, otherwise
we would have more than $4$ edges. Thus,

$$\sum_{i=1}^n  \min \{p-d_C(i), d_C(i)\} = d_C(j)-2 + \sum_{i \ne j}  d_C(i) = 2p-2=6=1.5p $$ 

\medskip
Case 5)  $p= 3$ and $d_C(j) = 2$ for some $j$. 
 We have two possible non-isomorphic graphs. One of them consists of a path with 2 edges and an additional edge while the 
  other is a path with 3 edges. For both cases we have
$$\sum_{i=1}^n  \min \{p-d_C(i), d_C(i)\}   \ge 4 = (4/3)p $$ 
\end{proof}

\begin{thm}
The problem of building an explainable clustering, via decision trees, that minimizes the $k-$medians cost function does not admit
an $(1+\epsilon)$-approximation,  for some $\epsilon >0$, in
polynomial time unless $P=NP$.
\end{thm}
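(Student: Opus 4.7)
The plan is to give a gap-preserving reduction from {\sc $\epsilon$-Gap-MinVC-3B-TF} to {\sc $\eta$-Gap-Explainable-kmedians}, reusing exactly the edge-to-point mapping ${\cal X}_G$ and the construction of ${\cal C}_S$ described in Section~\ref{sec:gappres}. Given a $3$-bounded triangle-free graph $G=(V,E)$ with parameter $k$, the reduction outputs the clustering instance $({\cal X}_G, k)$ with threshold $\kappa := |E|$, and I would verify the two gap directions in turn.

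For the YES direction, suppose $G$ has a vertex cover $S=\{i_1,\ldots,i_k\}$. Then ${\cal C}_S$ is explainable, and by construction every edge in $E_j$ is incident to $i_j$, so each nonempty $E_j$ is a star. By the preceding lemma, each star on $p_j$ edges contributes exactly $p_j$ to the $k$-medians cost, and hence the total cost is $\sum_j |E_j| = |E| = \kappa$.

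For the NO direction, suppose every vertex cover of $G$ has size greater than $(1+\epsilon)k$. Fix any $k$-clustering ${\cal C}=(C_1,\ldots,C_k)$ of ${\cal X}_G$ (explainable or not); let $s$ be the number of star clusters and let $P$ denote the total number of edges across the non-star clusters. The preceding lemma gives
\[
\mathrm{cost}({\cal C}) \;\ge\; \sum_{i\text{ star}} |C_i| + \tfrac{4}{3} \sum_{i\text{ non-star}} |C_i| \;=\; |E| + \tfrac{P}{3}.
\]
To lower-bound $P$, I would assemble a vertex cover of $G$ from the $s$ star centers together with one arbitrarily chosen endpoint of every edge in each non-star cluster: this gives a vertex cover of size at most $s+P \le k+P$, so the hypothesis forces $k+P > (1+\epsilon)k$ and therefore $P > \epsilon k$, whence $\mathrm{cost}({\cal C}) > |E| + \epsilon k/3$. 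Since $G$ is $3$-bounded we have $|E| \le 3|V|/2$, and the standard hard instances of {\sc $\epsilon$-Gap-MinVC-3B-TF} satisfy $k=\Theta(|V|)$; hence $k/|E| = \Omega(1)$ and the ratio $1 + \epsilon k/(3|E|)$ is bounded below by some absolute constant $\eta > 0$. Together with the gap-preservation fact above, this yields the claimed inapproximability.

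The main obstacle is the step $P > \epsilon k$: one must turn the star/non-star structure of an \emph{arbitrary} $k$-clustering, not just an explainable one, into a genuine vertex cover of $G$, which is what the bounded-degree hypothesis buys us by making ``one endpoint per non-star edge'' an affordable overhead. The $4/3$-versus-$1$ gap from the previous lemma, combined with $|E|=O(k)$ coming from the degree bound, is precisely what keeps the resulting $\eta$ bounded away from zero; a weaker star-vs-non-star gap or an unbounded degree would dilute $\eta$ to $o(1)$ and the reduction would cease to be gap-preserving.
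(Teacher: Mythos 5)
Your proposal is correct and follows essentially the same route as the paper: the same star/non-star decomposition via the preceding lemma, the same construction of a vertex cover from star centers plus one endpoint per non-star edge to force $P > \epsilon k$, and the same cost bound $|E| + P/3$. The only cosmetic difference is the final normalization: the paper derives $k \ge |E|/3$ directly from the degree bound, while you invoke $k=\Theta(|V|)$ for the hard instances; both give $k/|E|=\Omega(1)$ and hence the same constant gap $\eta$.
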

\begin{proof}
Let $G$ be a triangle-free graph with maximum degree not larger than 3 that satisfies one of the following 
cases: (i) $G$  has a vertex cover of size $k$ or (ii)  all vertex covers of $G$ 
have size at least $ k (1+\epsilon)$.

First, consider the case where $G$ has a vertex cover $S$ of size $k$.
Since the clustering ${\cal C}_{S}$ consists
of stars, it follows from the previous lemma that its cost  is $|E|$.

Now, 
 assume that all vertex  covers for $G$ have
size at least $k(1+\epsilon)$.
Let ${\cal C}$ be a clustering with $k$ groups for the corresponding $k$-medians
instance.

 Let $t$ be the number of groups in ${\cal C}$  that  are
stars and let $p$ be the total number of edges in the remaining clusters.
Since  there is no vertex cover for $G$ of
size smaller than $k(1+\epsilon)$ we must have 
$$ t + p \ge k(1+\epsilon),$$
otherwise we could obtain a cover for $G$  with size smaller than $k(1+\epsilon)$ by  using one vertex per star and one  additional
vertex for each of the $p$ edges.
Since $t \le k$ it follows that $p \ge k \epsilon$.
Moreover, we must have $k \ge|E|/3$ because the degree of every vertex in  $G$
is at most $3$. 
Thus, from the previous lemma,  the cost of clustering ${\cal C}$ is at least
$$ \frac{4p}{3}  + (|E|-p)=|E|+\frac{p}{3} \ge |E|+ \frac{k \epsilon}{ 3} \ge |E|+ \frac{|E| \epsilon}{ 9}=
|E|\left (1+ \frac{\epsilon}{ 9} \right). $$

We have concluded a gap preserving reduction from 
{\sc $\epsilon$-Gap-MinVC-3B-TF} to  
{\sc $\eta$-Gap-Explainable-$k$medians}.

\end{proof}

\subsection{Hardness of $k$-centers cost function}
\label{sec:complexity-kcenter}
In this section we discuss the computational complexity of minimizing the $k$-centers cost
function.
We show a gap preserving reduction from the {\sc $\epsilon$-Gap-MinVC-3B-TF} problem
to the {\sc $\eta$-Gap-Explainable-kcenters}
 problem.

\begin{thm} \label{thm:no_approx-kcenter}
The problem of building an explainable clustering, via decision trees, that minimizes the $k-$centers cost function does not admit
an $(1+\epsilon)$-approximation, for some $\epsilon >0$, in
polynomial time unless $P=NP$.
\end{thm}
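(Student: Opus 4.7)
The plan is to give a gap-preserving reduction from \textsc{$\epsilon$-Gap-MinVC-3B-TF} to \textsc{$\eta$-Gap-Explainable-kcenters}, reusing the edge-to-vector embedding ${\cal X}_G$ and the clustering construction ${\cal C}_S$ from Section~\ref{sec:hardness}. From a triangle-free $3$-bounded graph $G$ with parameter $k$, I would output the explainable $k$-centers instance $({\cal X}_G, k, \kappa)$ with threshold $\kappa:=\sqrt{2/3}$. The task will then be to show that a YES-instance of \textsc{MinVC} yields an explainable $k$-clustering of cost at most $\kappa$, while a NO-instance forces every explainable $k$-clustering to have cost at least $1$; this is a multiplicative gap of $\eta=\sqrt{3/2}-1>0$, and combining this with the NP-hardness of \textsc{$\epsilon$-Gap-MinVC-3B-TF} and Facts~1--2 will conclude the proof.

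For the upper bound, suppose $S=\{i_1,\ldots,i_k\}$ is a vertex cover of $G$ of size $k$. Because $G$ is triangle-free and $3$-bounded, each cluster $E_j$ of ${\cal C}_S$ is a star of at most three edges, all meeting at $i_j$. Its $p\le 3$ points take the form $e_{i_j}+e_{u_\ell}$ (where $e_v$ denotes the $v$-th standard basis vector of $\R^n$) and are pairwise at Euclidean distance $\sqrt{2}$, so they are the vertices of a regular $(p{-}1)$-simplex. Choosing the representative $\mu(E_j):=e_{i_j}+\tfrac{1}{p}\sum_{\ell=1}^{p} e_{u_\ell}$, each point of $E_j$ lies at distance exactly $\sqrt{(p-1)/p}\le \sqrt{2/3}$ from $\mu(E_j)$, so the $k$-centers cost of ${\cal C}_S$ is at most $\kappa$.

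For the lower bound, assume every vertex cover of $G$ has size at least $k(1+\epsilon)$. I claim that any explainable $k$-clustering ${\cal C}$ of ${\cal X}_G$ contains at least one cluster $C$ whose underlying edge set is not a star: otherwise, picking one common vertex of each star cluster (or either endpoint when the cluster has a single edge) would yield a vertex cover of $G$ of size at most $k$, contradicting the hypothesis. The key structural step is the observation that in a triangle-free graph, any non-star subgraph contains two disjoint edges. Indeed, if no two of its edges were disjoint, fix any edge $\{a,b\}$; since the subgraph is not a star at $a$ or at $b$, there must exist edges $\{a,c\}$ and $\{b,d\}$ with $c\ne b$ and $d\ne a$, and these two edges can share a vertex only if $c=d$, which closes the forbidden triangle $abc$. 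The two points of ${\cal X}_G$ corresponding to such a disjoint pair lie in the same cluster $C$ and are at Euclidean distance $2$, so the minimum enclosing ball of $C$ has radius at least $1$; hence ${\cal C}$ has $k$-centers cost at least $1\ge (1+\eta)\kappa$. The only non-routine ingredient is the triangle-free/non-star/matching-of-size-two observation; the simplex circumradius computation is an elementary calculation.
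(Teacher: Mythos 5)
Your proposal is correct and follows the same overall route as the paper: the same reduction from {\sc $\epsilon$-Gap-MinVC-3B-TF} via ${\cal X}_G$ and ${\cal C}_S$, the same YES-case bound $\sqrt{(p-1)/p}\le\sqrt{2/3}$ using the centroid of each star, and the same NO-case structure (some cluster is not a star, a non-star cluster in a triangle-free graph contains two disjoint edges, and those two points force cost at least $1$). Where you differ is in two local sub-arguments, and in both places your version is cleaner. First, to extract two disjoint edges from a non-star cluster, the paper picks the vertex $v$ of maximum degree in the cluster, an edge $e$ avoiding $v$, and runs a small case analysis on how an edge at $v$ can meet $e$; you instead argue directly that ``no two disjoint edges'' plus ``not a star'' forces a triangle, which is shorter and avoids the case split (your phrasing is slightly compressed --- the edge $\{b,d\}$ exists because some edge misses $a$ and, absent disjoint pairs, must then contain $b$ --- but the argument is sound). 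Second, for the lower bound of $1$, the paper computes $\|e-\mu(A)\|^2+\|e'-\mu(A)\|^2\ge 2$ coordinate by coordinate, whereas you observe that the two points are at distance $2$ and invoke the triangle inequality; again equivalent but more elementary, and it makes transparent that the bound holds for an arbitrary representative. Neither change affects the gap $\eta=\sqrt{3/2}-1$, and nothing essential is missing.
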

\begin{proof}
Let $G=(V,E)$ be a triangle-free graph with maximum degree not larger than 3
that satisfies one of the following 
cases: (i) $G$  has a vertex cover of size $k$ or (ii)  all vertex covers of $G$ 
have size at least $ k(1+\epsilon)$.

First, consider the case where $G$ has a vertex cover $S$ of size $k$.
In this case, the clustering ${\cal C}_S=(E_1,\ldots,E_k)$ consists
of stars with at most 3 edges.
For the representative of $E_j$, as in the  proof of Theorem \ref{thm:no_approx}, we use the mean of the points that lie
in $E_j$.

Thus, the distance of each point in $E_i$ to its representative is
the square root of the rightmost term of  (\ref{eq:distance}), which is at most $\sqrt{1/3}$
since $G$ has maximum degree 3.

Now, we assume that $G$ does not have a vertex cover with $k$ vertex.
Let ${\cal C}$ be a clustering with $k$ groups for the edges of $E$.
One of the groups, say $A$, does not have a vertex that touches all the edges in $A$. 
 Pick the vertex, say $v$, that touches the largest number of edges in $A$.
Consider an edge $e=yz$ in $A$ that does not touch $v$. 
We show that there is another 
edge in $A$, say $e'$, that does not have intersection with $e$.
In fact, pick an edge $f=vw$. If $f$ does not intersect
$e$ ($w$ is not an endpoint of $e$) we set $e'=f$. Otherwise, we assume w.l.o.g. that  $f$ intersects $e$ at point $y$, that is, $w=y$. We
know that $vz$ is not an edge for otherwise we would have a triangle $vwz$ in $G$.
Since $v$ is the vertex that touches the largest number of edges in $A$ then
$v$ must touch an edge $f'=vz'$, with $z' \ne y$ and $z' \ne z$. 
We set $e'=f'$.

We can argue that the distance of the representative $\mu(A)$ of $A$ to either $e$ or $e'$ is at least
$1$. For that, we  consider the values of  $\mu(A)$
at the components of the vertexes that define the edges $e'$ and $e$. 
Let $\mu_1,\mu_2,\mu_3$ and $\mu_4$ be these values. 
We have that 
$$||e-\mu(A)||^2 +||e'-\mu(A)||^2  \ge  \sum_{i=1}^4 (1-\mu_i)^2 + \mu_i^2 = $$
$$ 4 -2(\mu_1+\mu_2+\mu_4+\mu_4)+2(\mu_1^2 +\mu_2^2+\mu^2_3+\mu^2_4) \ge 2 $$
Thus, either $e$ or $e'$ is at distance at least 1 from the representative of $A$
\end{proof}

\section{A polynomial time algorithm for the maximum-spacing cost function}
\label{sec:complexity-max-spacing}
We describe {\tt MaxSpacing}, a simple greedy algorithm that  finds
an explainable partition of maximum spacing in polynomial time.

To simplify its  description we introduce some notation.  For a set of leaves $L$ in a decision tree, we use $sp(L)$  to 
refer to the spacing of the partition of the points in ${\cal X}$ induced by the leaves in  $L$.
Given a set of leaves $L$, a leaf $\ell \in L$ and an axis-aligned cut $\gamma=(i, \theta)$, we use $L_{\gamma,\ell}$
to denote the set of leaves obtained when $\gamma$ is applied to split the points that reach $\ell$.
More precisely, $L_{\gamma,\ell}$ is obtained from $L$ by removing $\ell$   and adding   the two leaves
that are created by using $\gamma$ to split the points that reach $\ell$.

A  pseudo-code for {\tt MaxSpacing} is presented in Algorithm \ref{alg:maximum-spacing}.
The algorithm adopts a natural greedy strategy that at each step chooses the cut that yields to the partition
of maximum spacing.
We note that it runs in polynomial time because in Step 1 we just need to test at most $(|{\cal X}|-1)d$ axis-aligned cuts: for each $\ell \in L$ and each dimension $i \in [d]$ we sort the $|\ell|$ points
that reach $\ell$
according to their coordinate $i$ and consider the cuts
$(i,\theta_j)$, for $j=1,\ldots,|\ell|$, where  $\theta_j$ is the midpoint between
the values of the $i$-th coordinate of the $j$th  and $(j+1)$th points in the sorted list.

\begin{algorithm}[H]
  \caption{{\tt MaxSpacing}(${\cal X}$: set of points; $k$: integer)}

   \begin{algorithmic}[]
  	
  	\small

\medskip

      \STATE Initialize a decision tree with only one leaf $\ell$ and associate it with ${\cal X}$  
      
      \STATE $L \gets \	\{\ell\}$

		\STATE Repeat $k-1$ times:

\begin{enumerate}
\item Find a cut $\gamma$ and a leaf $\ell \in L$ that simultaneously satisfy: 
\begin{itemize}
\item[(i)]
 $\gamma$ splits the points that reach $\ell$ into 2 non-empty groups
\item[(ii)] $sp(L_{\gamma,\ell}) \ge sp(L_{\gamma',\ell'}) $
for every $\ell' \in L$ and every axis-aligned cut $\gamma'$ that splits the points that
reach $\ell'$ into two non-empty sets.
\end{itemize}
\item Split leaf $\ell$ using cut $\gamma$

\item $ L \leftarrow L_{\gamma,\ell}$
 \end{enumerate}  

  \end{algorithmic}
  \label{alg:maximum-spacing}
\end{algorithm}

In what follows, we show that {\tt MaxSpacing} produces an explainable partition with maximum
possible (optimal) spacing.
The following simple fact  will be useful.

\remove{
\begin{fact} 
\label{fact:simple}
Let $A,B,C$ and $D$ be a set of points in ${\cal X}$. If  $A \subseteq C$ and $B \subseteq D$ then
$sp(A,B) \ge sp(C,D)$.
\end{fact}  
\begin{proof}
Let $a \in A$ and $b \in B$ be such that $dist(a,b)=sp(A,B)$.
Since $a \in C$ and $b \in D$ it follows that $sp(C,D) \le dist(a,b)=sp(A,B)$ 
\end{proof}
}

\begin{fact} 
\label{fact:simple2}
Let $d^*_i$ be the spacing of an optimal explainable partition with $i+1$ groups.
Then, $  d^*_i \ge d^*_{i+1}$, for $i=1,\ldots,k-1$.
\end{fact}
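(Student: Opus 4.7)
The plan is to show that from any optimal explainable partition with $i+2$ groups we can construct an explainable partition with $i+1$ groups whose spacing is at least as large, which immediately yields $d^*_i \ge d^*_{i+1}$.

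First, let $T$ be a standard decision tree with $i+2$ leaves whose induced partition $\mathcal{C}$ has spacing $d^*_{i+1}$. Any binary tree with at least two leaves contains an internal node both of whose children are leaves; pick such a node $v$ in $T$ and let $\ell_1,\ell_2$ be its two leaf children. Form a new tree $T'$ by deleting $\ell_1$ and $\ell_2$ and turning $v$ into a leaf. Then $T'$ is a standard decision tree with $i+1$ leaves, so its induced partition $\mathcal{C}'$ is explainable and has spacing at most $d^*_i$.

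Next, observe that $\mathcal{C}'$ is obtained from $\mathcal{C}$ by merging the two groups associated with $\ell_1$ and $\ell_2$ into a single group (the one associated with the new leaf $v$), while every other group is unchanged. Hence the set of point pairs $(\mathbf{x},\mathbf{y})$ with $\mathbf{x}$ and $\mathbf{y}$ lying in distinct groups of $\mathcal{C}'$ is a subset of the analogous set for $\mathcal{C}$: the only pairs that disappear are those that used to lie in $\ell_1$ and $\ell_2$ respectively. Since spacing is defined as a minimum over such pairs, removing pairs can only increase (or preserve) the minimum, so
\[
sp(\mathcal{C}') \ge sp(\mathcal{C}) = d^*_{i+1}.
\]
Combining the two inequalities gives $d^*_i \ge sp(\mathcal{C}') \ge d^*_{i+1}$, as desired.

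There is really no hard step here; the only thing to be careful about is ensuring that $T'$ is still a valid standard decision tree, which follows immediately from the fact that collapsing an internal node whose children are leaves preserves the axis-aligned-test structure of the remaining internal nodes.
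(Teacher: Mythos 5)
Your proof is correct and follows essentially the same route as the paper's: both take an optimal tree with $i+2$ leaves, collapse a pair of sibling leaves into their parent, and observe that merging groups can only remove pairs from the minimization defining the spacing, so the spacing does not decrease. Your version just spells out the existence of a sibling leaf pair and the set-inclusion argument slightly more explicitly.
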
  
\begin{proof}
Let $D^*$ be a decision tree that induces a partition with $i+2$ groups
that has spacing $d^*_{i+1}$. Let $D$ be a decision tree obtained by removing 
two leaves that are siblings in $D^*$ and turning their parent into a leaf. 
Let $x$ and $y$ be two closest points among those that reach different
leaves in $D$. Since these points also reach distinct leaves in 
$D^*$ we have that the spacing of  the leaves in $D$ is not smaller
than that of the leaves in $D^*$. Thus,
$d^*_i \ge sp( \mbox{Leaves of } D) \ge d^*_{i+1}$.
 \end{proof}

\begin{thm}
For every $1 \le i \le k-1$,
the
partition induced by the leaves of   {\tt MaxSpacing} algorithm
by the end of  iteration $i$ has the maximum spacing, among the
explainable partitions with $i+1$ groups for ${\cal X}$.
\label{thm:spacing}
\end{thm}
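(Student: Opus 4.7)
The plan is to prove the theorem by induction on $i$, with the technical core being an extension lemma for decision-tree partitions. First I would establish a simple splitting formula: for any set of leaves $L$, any $\ell \in L$, and any cut $\gamma$ splitting the points of $\ell$ into $\ell_L, \ell_R$,
$$sp(L_{\gamma,\ell}) = \min\bigl(sp(L),\, d(\ell_L, \ell_R)\bigr),$$
where $d(A,B) = \min\{dist(x,y) : x \in A, y \in B\}$. This holds because the inter-cluster pairs of $L_{\gamma,\ell}$ are exactly those of $L$ together with the new pairs separated by $\gamma$. Consequently, at each iteration the algorithm is effectively maximising $d(\ell_L, \ell_R)$ over admissible cuts, capped by the unchanged value $sp(L_{i-1})$.

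The key step is the following extension lemma: for any finite $Y \subset \mathbb{R}^d$, any $j$-leaf decision-tree partition $L$ of $Y$ with $sp(L) \ge d$, if there exists some $(j{+}1)$-leaf decision-tree partition of $Y$ with spacing $\ge d$, then some admissible cut of some leaf of $L$ has gap at least $d$. I would prove it by strong induction on $|Y|$. Fix a $(j{+}1)$-leaf witness tree $T^*$ and look at its root cut $\gamma_0$, which splits $Y$ into $Y_L \cup Y_R$. If $\gamma_0$ splits some leaf $\ell$ of $L$, then $d(\ell \cap Y_L, \ell \cap Y_R) \ge d(Y_L, Y_R) \ge sp(T^*) \ge d$ (the middle inequality uses that $\{Y_L, Y_R\}$ is a coarsening of $T^*$'s partition), so $(\gamma_0, \ell)$ is the desired cut. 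Otherwise every leaf of $L$ lies entirely in $Y_L$ or in $Y_R$, giving $L = L_L \cup L_R$ with $|L_L| + |L_R| = j$; since the two subtrees of $T^*$ have $j{+}1$ leaves in total, w.l.o.g. $|T^*_L| \ge |L_L| + 1$. Coarsening $T^*_L$ down to $|L_L|+1$ leaves (which cannot decrease spacing) yields a witness tree on the strictly smaller set $Y_L$, and $sp(L_L) \ge sp(L) \ge d$, so the inductive hypothesis applied to $Y_L$ supplies a cut inside $L_L$, which is also a valid cut in $L$.

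The main induction on $i$ then follows quickly. For the base case $i=1$, the first iteration exhaustively searches all cuts and picks one of maximum spacing, which is $d^*_1$ by definition. For the inductive step, assume $sp(L_{i-1}) = d^*_{i-1}$; Fact~\ref{fact:simple2} gives $d^*_{i-1} \ge d^*_i$, so the extension lemma applies with $Y = {\cal X}$, $L = L_{i-1}$, $j = i$, $d = d^*_i$ and produces an admissible cut whose gap is at least $d^*_i$. Since {\tt MaxSpacing} picks a cut at least as good, the splitting formula yields $sp(L_i) \ge \min(d^*_{i-1}, d^*_i) = d^*_i$; combined with the trivial upper bound $sp(L_i) \le d^*_i$, equality follows.

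The principal obstacle is the recursive case of the extension lemma: one must set up the induction so that both hypotheses (the spacing lower bound and the existence of a witness tree) transfer cleanly to the smaller instance $Y_L$. This rests on two auxiliary observations --- that $sp(L_L) \ge sp(L)$ (the inter-cluster pairs of $L_L$ form a subset of those of $L$) and that coarsening a decision tree cannot decrease its spacing --- both essentially the same monotonicity principle underlying Fact~\ref{fact:simple2}.
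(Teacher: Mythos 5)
Your proof is correct, and its core mechanism is the same as the paper's: use the optimal $(i{+}1)$-group tree as a witness and descend it from the root until you find a cut that splits a single current leaf, then observe that the new spacing is the minimum of the old spacing (handled by the induction hypothesis plus Fact~\ref{fact:simple2}) and the gap of the new cut (handled by the fact that the separated pairs lie in distinct groups of the witness). Where you differ is in how that descent is organized. The paper locates the splitting node directly inside the full optimal tree $D^*$: by pigeonhole two leaves of $D^*$ meet the same current leaf, and the node $\nu^*$ is taken as the last node on the root-to-leaf path all of whose ancestors keep every current leaf unsplit. You instead package the step as a standalone extension lemma proved by recursion on $|Y|$, recursing into the child whose subtree has strictly more witness leaves than current leaves and \emph{coarsening} that subtree back down to the right number of leaves so the inductive hypothesis applies verbatim. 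The coarsening step and the explicit splitting identity $sp(L_{\gamma,\ell}) = \min\bigl(sp(L),\, d(\ell_L,\ell_R)\bigr)$ are not in the paper (the latter is replaced by a two-case analysis of the closest separated pair), but they are equivalent in content; your version is somewhat more modular and makes explicit the monotonicity facts the paper uses implicitly, at the cost of a second nested induction. One point worth stating explicitly in your write-up: the recursion needs $Y_R \neq \emptyset$ (so that $|Y_L| < |Y|$), which holds because the witness tree may be assumed to have all leaves non-empty; and the lemma's hypothesis on $L$ never actually uses that $L$ is tree-induced, so you could state it for arbitrary partitions and avoid worrying about whether $L_L$ is itself a decision-tree partition of $Y_L$.
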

\begin{proof}

Let ${\cal C}^*_i$ be  an optimal explainable partition with $i+1$ groups and let $d^*_i$ be its  spacing.
Moreover, let $L_{i}$, with $i<k-1$, be the set of leaves by the end of iteration $i$ of {\tt MaxSpacing} algorithm.
By the greedy choice $sp(L_1)= d^*_1$.
We assume by induction that the spacing of  $L_i$ is  $d^*_{i}$ and 
show that the spacing of  $L_{i+1}$  is  $d^*_{i+1}$.


For a node $\nu$ in a decision tree, let $P(\nu)$ be the set of points that reach $\nu$.
Let $\nu^*$ be a node in the decision tree $D^*$ for ${\cal C}_{i+1}^*$ that satisfies the following:
(i) for each $\ell \in L_i$ either $P(\ell) \subseteq P(\nu^*)$ or
$P(\ell) \cap P(\nu^*) =\emptyset$ and (ii) some child of $\nu^*$ does not satisfy (i).
We will use the cut associated with  $\nu^*$  in $D^*$ to argue that we can properly split $L_i$.

To prove the existence of a  node $\nu^*$ with such properties, it suffices to show  that the root $r^*$ of $D^*$ satisfies $(i)$ and
some leaf $\ell^*$ from  $D^*$  does not satisfy (i) since, in this case,  we can  set $\nu^*$ as the last node in the path from $r^*$  to $\ell^*$ that satisfies (i).
Clearly, $r^*$ satisfies (i). 
It remains to argue that some leaf $\ell^*  \in D^*$ does not satisfy (i). 
Since the number of leaves in $L_i$ is smaller than the number of leaves in $D^*$, 
by the pigeonhole principle, there are two leaves, say $\ell^*_1$ and $\ell^*_2$, in $D^*$ that
contain points from the same leaf $\ell$ in $L_i$. Thus,  neither
$P(\ell) \cap P(\ell^*_1) \ne \emptyset$ nor  $P(\ell) \subseteq P(\ell^*_1)$.
We set $\ell^*$ to $\ell^*_1$ and $\nu^*$   as the last node in the path from $r^*$  to $\ell^*_1$ that satisfies (i).


Let $\nu^*_{ch}$ be a child of $\nu^*$ in $D^*$ that does not satisfy (i).
Moreover, let  $\gamma$ be the cut associated with $\nu^*$
and let $\ell$ be a leaf in $L_i$ such that 
$P(\ell) \subseteq P(\nu^*)$, $P(\ell) \cap P(\nu^*_{ch}) \ne \emptyset$
and $P(\ell)\not\subset P(\nu^*_{ch}) $.
We show that the spacing of the set of leaves $L'_i$ obtained from $L_i$
by applying cut  $\gamma$ to $\ell$ is at least
$d^*_{i+1}$. 
Let $\ell_1$ and $\ell_2$ be the two new leaves that are created  by applying
$\gamma$ to $\ell$ and 
let $\mathbf{x}$ and $\mathbf{y}$
be the two closest points (according to $dist$) among those that reach
different leaves in  $L'_i$. 
If $\mathbf{x}$ reaches $\ell_1$ (resp. $\ell_2$)  and $\mathbf{y}$ reaches
$\ell_2$ (resp. $\ell_1$) then $$sp(L'_i)=dist(\mathbf{x},\mathbf{y}) \ge sp({\cal C}_{i+1}^*) = d^*_{i+1}$$ because,
due to the application of  cut $\gamma$ on $\nu^*$,
$\mathbf{x}$ and $\mathbf{y}$ lies in different groups in ${\cal C}_{i+1}^*$.
If some of them, say $\mathbf{x}$, does not 
reach  $\ell$ then $\mathbf{x}$ and 
$\mathbf{y}$ reach different leaves in $L_i$ and, thus,
$$sp(L'_i) = dist(\mathbf{x},\mathbf{y}) \ge sp(L_i) = d^*_i \ge d^*_{i+1}, $$
where the last inequality follows from Fact \ref{fact:simple2}.

We have shown that there exists a leaf $\ell$ in $L_i$ and a  cut $\gamma$ 
such that the application of $\gamma$ to $\ell$ yields to a partition of spacing $d^*_{i+1}$.
Thus, due to  the greedy choice, {\tt MaxSpacing} obtains a partition of spacing $d^*_{i+1}$
by the end of iteration $i+1$.
\end{proof}

\remove{
It follows from Facts \ref{fact:simple} and \ref{fact:simple2} and from the induction hypothesis that $sp(\ell_1,\hat{\ell}) \ge sp(\ell,\hat{\ell}) \ge d^*_i \ge d^*_{i+1}  $
and  $sp(\ell_2,\hat{\ell}) \ge sp(\ell,\hat{\ell}) \ge d^*_i \ge d^*_{i+1} $, for every 
$\hat{\ell} \in L_i$, with $\hat{\ell} \ne \ell$.
It remains to show that $sp(\ell_1,\ell_2) \ge d^*_{i+1}$. This holds because
$P(\ell) \subseteq P(\nu')$ and the cut $\gamma$ 
splits $P(\nu')$ into two groups with spacing at least $d^*_{i+1}$. 
}

\remove{
{\bf A PROVA ATUAL ESTA PARA UM k ESPECÏFICO. ARGUMENTAR QUE VALE PARA TODO K}.

Let ${\cal C}^*$ be  an optimal explainable partition  and let $d^*$ be its  spacing.
Moreover, let $L_i$, with $i<k-1$, be the set of leaves by the end of iteration $i$ of our algorithm.
Let $r^*_1$ and $r^*_2$ be the two children of the root $r^*$ of the 
decision tree for ${\cal C}^*$. Let $d^*_{12}$ be the minimum distance between a point
that reach $r^*_1$ and a point that reach $r^*_2$, that is, $sp(r'_1,r'_2)=d^*_{12}$. Since these two points
 belong to different groups in  ${\cal C}^*$ we have that  $sp(r^*_1,r^*_2) \ge d^*$.
Thus, by the greedy choice, the spacing of $L_1$ is at least $sp(r^*_1,r^*_2) \ge d^*$ .
We assume by induction that the spacing of  $L_i$ is at least $d^*$ and 
show that the spacing of  the set of leaves $L_{i+1}$ obtained by 
the end of iteration $i+1$  is at least $d^*$.


For a node $\nu$ in a decision tree, let $P(\nu)$ be the set of points that reach $\nu$
and let $\nu'$ be a node in the decision tree $D^*$ for ${\cal C}^*$ that satisfies the following:
(i) for each $\ell \in L_i$ either $P(\ell) \subseteq P(\nu')$ or
$P(\ell) \cap P(\nu') =\emptyset$ and (ii) some child of $\nu'$ does not satisfy (i).
The cut associated with $\nu'$ in $D^*$ will be employed to split $L_i$.
To prove existence of  such  node $\nu'$, it suffices to show  that the root $r^*$ of $D^*$ satisfies $(i)$ and
some leaf $\ell'$ from  $D^*$  does not satisfy (i) since, in this case,  we can  set $\nu'$ as last node in the path from $r^*$  to $\ell'$ that satisfies (i).
Clearly, $r^*$ satisfies (i). 
It remains to argue that some leaf $\ell'  \in D^*$ does not satisfy (i). 
Since $i< k-1$, 
by the pigeonhole principle, there are two leaves, say $\ell_1$ and $\ell_2$, in $D^*$ that
contain points from the same leaf $\ell$ in $L_i$. Thus,  neither
$P(\ell) \cap P(\ell_1) \ne \emptyset$ nor  $P(\ell) \subseteq P(\ell_1)$.
We set $\ell'$ to $\ell$.


Let $\nu'_{ch}$ a child of $\nu'$ in $D^*$.
Moreover, let $\ell$ be a leaf in $L_i$ such that 
$P(\ell) \subseteq P(\nu')$, $P(\ell) \cap P(\nu'_{ch}) \ne \emptyset$
and $P(\ell)\not\subset P(\nu'_{ch}) $.
We show that the spacing of 
$L_{\gamma,\ell}$ is at least
$d^*$, where  $\gamma$ is the cut associated with $\nu'$.
Let $\ell_1$ and $\ell_2$ be the two leaves that we obtain by applying
$\gamma$ to $\ell$. It follows from Fact \ref{fact:simple} and from the induction hypothesis that $sp(\ell_1,\hat{\ell}) \ge sp(\ell,\hat{\ell}) \ge d^*$
and  $sp(\ell_2,\hat{\ell}) \ge sp(\ell,\hat{\ell}) \ge d^*$, for every 
$\hat{\ell} \in L_i$, with $\hat{\ell} \ne \ell$.
It remains to show that $sp(\ell_1,\ell_2) \ge d^*$. This holds because
$P(\ell) \subseteq P(\nu')$ and the cut $\gamma$ 
splits $P(\nu')$ into two groups with spacing at least $d^*$. 
}

\section{Conclusions}
We have showed that the problems of finding explainable 
clustering (via decision trees) that optimize  the classical $k$-means,
$k$-medians and $k$-centers cost functions do not admit polynomial time $(1+\epsilon)$-approximations.
These results help to formally justify the quest for heuristics and/or approximation algorithms.

The algorithms recently proposed in the literature 
for building explainable clustering  compare their costs with the
costs of optimal unrestricted clustering
\cite{dasgupta2020explainable,DBLP:conf/icml/LaberM21,DBLP:conf/icml/MakarychevS21,charikar2021near,esfandiari2021almost,gamlath2021nearly}.
  A major open question in this line of research is
whether better bounds can be obtained when the comparison is made against the 
optimal explainable clustering.

For the spacing cost function we provided a simple polynomial time algorithm
that computes the explainable partition with maximum spacing.
An interesting note is that we have not used the fact that the cuts are axis-aligned in the proof of
Theorem \ref{thm:spacing} and,  thus, our result holds for any family of cuts.


\bibliography{biblio}

\begin{thebibliography}{10}
\expandafter\ifx\csname url\endcsname\relax
  \def\url#1{\texttt{#1}}\fi
\expandafter\ifx\csname urlprefix\endcsname\relax\def\urlprefix{URL }\fi
\expandafter\ifx\csname href\endcsname\relax
  \def\href#1#2{#2} \def\path#1{#1}\fi

\bibitem{ribeiro2016should}
M.~T. Ribeiro, S.~Singh, C.~Guestrin, " why should i trust you?" explaining the
  predictions of any classifier, in: Proceedings of the 22nd ACM SIGKDD
  international conference on knowledge discovery and data mining, 2016, pp.
  1135--1144.

\bibitem{dasgupta2020explainable}
S.~Dasgupta, M.~Moshkovitz, C.~Rashtchian, N.~Frost, Explainable k-means and
  k-medians clustering, in: ICML, Vol. 119 of Proceedings of Machine Learning
  Research, {PMLR}, 2020, pp. 7055--7065.

\bibitem{bertsimas2020interpretable}
D.~Bertsimas, A.~Orfanoudaki, H.~Wiberg, Interpretable clustering: an
  optimization approach, Machine Learning (2020) 1--50.

\bibitem{DBLP:conf/icml/LaberM21}
E.~S. Laber, L.~Murtinho,
  \href{http://proceedings.mlr.press/v139/laber21a.html}{On the price of
  explainability for some clustering problems}, in: M.~Meila, T.~Zhang (Eds.),
  Proceedings of the 38th International Conference on Machine Learning, {ICML}
  2021, 18-24 July 2021, Virtual Event, Vol. 139 of Proceedings of Machine
  Learning Research, {PMLR}, 2021, pp. 5915--5925.
\newline\urlprefix\url{http://proceedings.mlr.press/v139/laber21a.html}

\bibitem{DBLP:conf/icml/MakarychevS21}
K.~Makarychev, L.~Shan,
  \href{http://proceedings.mlr.press/v139/makarychev21a.html}{Near-optimal
  algorithms for explainable k-medians and k-means}, in: M.~Meila, T.~Zhang
  (Eds.), Proceedings of the 38th International Conference on Machine Learning,
  {ICML} 2021, 18-24 July 2021, Virtual Event, Vol. 139 of Proceedings of
  Machine Learning Research, {PMLR}, 2021, pp. 7358--7367.
\newline\urlprefix\url{http://proceedings.mlr.press/v139/makarychev21a.html}

\bibitem{charikar2021near}
M.~Charikar, L.~Hu,
  \href{https://doi.org/10.1137/1.9781611977073.101}{Near-optimal explainable
  k-means for all dimensions}, in: J.~S. Naor, N.~Buchbinder (Eds.),
  Proceedings of the 2022 {ACM-SIAM} Symposium on Discrete Algorithms, {SODA}
  2022, Virtual Conference / Alexandria, VA, USA, January 9 - 12, 2022, {SIAM},
  2022, pp. 2580--2606.
\newblock \href {https://doi.org/10.1137/1.9781611977073.101}
  {\path{doi:10.1137/1.9781611977073.101}}.
\newline\urlprefix\url{https://doi.org/10.1137/1.9781611977073.101}

\bibitem{esfandiari2021almost}
H.~Esfandiari, V.~S. Mirrokni, S.~Narayanan,
  \href{https://doi.org/10.1137/1.9781611977073.103}{Almost tight approximation
  algorithms for explainable clustering}, in: J.~S. Naor, N.~Buchbinder (Eds.),
  Proceedings of the 2022 {ACM-SIAM} Symposium on Discrete Algorithms, {SODA}
  2022, Virtual Conference / Alexandria, VA, USA, January 9 - 12, 2022, {SIAM},
  2022, pp. 2641--2663.
\newblock \href {https://doi.org/10.1137/1.9781611977073.103}
  {\path{doi:10.1137/1.9781611977073.103}}.
\newline\urlprefix\url{https://doi.org/10.1137/1.9781611977073.103}

\bibitem{gamlath2021nearly}
B.~Gamlath, X.~Jia, A.~Polak, O.~Svensson, Nearly-tight and oblivious
  algorithms for explainable clustering, in: Thirty-Fifth Conference on Neural
  Information Processing Systems, 2021.

\bibitem{bandyapadhyay2021good}
S.~Bandyapadhyay, F.~V. Fomin, P.~A. Golovach, W.~Lochet, N.~Purohit,
  K.~Simonov, \href{https://ojs.aaai.org/index.php/AAAI/article/view/20306}{How
  to find a good explanation for clustering?}, in: Thirty-Sixth {AAAI}
  Conference on Artificial Intelligence, {AAAI} 2022, Thirty-Fourth Conference
  on Innovative Applications of Artificial Intelligence, {IAAI} 2022, The
  Twelveth Symposium on Educational Advances in Artificial Intelligence, {EAAI}
  2022 Virtual Event, February 22 - March 1, 2022, {AAAI} Press, 2022, pp.
  3904--3912.
\newline\urlprefix\url{https://ojs.aaai.org/index.php/AAAI/article/view/20306}

\bibitem{liu2000clustering}
B.~Liu, Y.~Xia, P.~S. Yu, Clustering through decision tree construction, in:
  Proceedings of the ninth international conference on Information and
  knowledge management, 2000, pp. 20--29.

\bibitem{fraiman2013interpretable}
R.~Fraiman, B.~Ghattas, M.~Svarc, Interpretable clustering using unsupervised
  binary trees, Advances in Data Analysis and Classification 7~(2) (2013)
  125--145.

\bibitem{DBLP:journals/siamcomp/MegiddoS84}
N.~Megiddo, K.~J. Supowit, \href{https://doi.org/10.1137/0213014}{On the
  complexity of some common geometric location problems}, {SIAM} J. Comput.
  13~(1) (1984) 182--196.
\newblock \href {https://doi.org/10.1137/0213014} {\path{doi:10.1137/0213014}}.
\newline\urlprefix\url{https://doi.org/10.1137/0213014}

\bibitem{DBLP:conf/compgeom/AwasthiCKS15}
P.~Awasthi, M.~Charikar, R.~Krishnaswamy, A.~K. Sinop,
  \href{https://doi.org/10.4230/LIPIcs.SOCG.2015.754}{The hardness of
  approximation of euclidean k-means}, in: L.~Arge, J.~Pach (Eds.), 31st
  International Symposium on Computational Geometry, SoCG 2015, June 22-25,
  2015, Eindhoven, The Netherlands, Vol.~34 of LIPIcs, Schloss Dagstuhl -
  Leibniz-Zentrum f{\"{u}}r Informatik, 2015, pp. 754--767.
\newblock \href {https://doi.org/10.4230/LIPIcs.SOCG.2015.754}
  {\path{doi:10.4230/LIPIcs.SOCG.2015.754}}.
\newline\urlprefix\url{https://doi.org/10.4230/LIPIcs.SOCG.2015.754}

\bibitem{DBLP:conf/soda/Cohen-AddadSL22}
V.~Cohen{-}Addad, {Karthik {C. S.}}, E.~Lee,
  \href{https://doi.org/10.1137/1.9781611977073.63}{Johnson coverage
  hypothesis: Inapproximability of k-means and k-median in $l_p$ metrics}, in:
  J.~S. Naor, N.~Buchbinder (Eds.), Proceedings of the 2022 {ACM-SIAM}
  Symposium on Discrete Algorithms, {SODA} 2022, Virtual Conference /
  Alexandria, VA, USA, January 9 - 12, 2022, {SIAM}, 2022, pp. 1493--1530.
\newblock \href {https://doi.org/10.1137/1.9781611977073.63}
  {\path{doi:10.1137/1.9781611977073.63}}.
\newline\urlprefix\url{https://doi.org/10.1137/1.9781611977073.63}

\bibitem{DBLP:journals/tcs/Gonzalez85}
T.~F. Gonzalez, \href{https://doi.org/10.1016/0304-3975(85)90224-5}{Clustering
  to minimize the maximum intercluster distance}, Theor. Comput. Sci. 38 (1985)
  293--306.
\newblock \href {https://doi.org/10.1016/0304-3975(85)90224-5}
  {\path{doi:10.1016/0304-3975(85)90224-5}}.
\newline\urlprefix\url{https://doi.org/10.1016/0304-3975(85)90224-5}

\bibitem{DBLP:journals/siamcomp/AhmadianNSW20}
S.~Ahmadian, A.~Norouzi{-}Fard, O.~Svensson, J.~Ward,
  \href{https://doi.org/10.1137/18M1171321}{Better guarantees for k-means and
  euclidean k-median by primal-dual algorithms}, {SIAM} J. Comput. 49~(4)
  (2020).
\newblock \href {https://doi.org/10.1137/18M1171321}
  {\path{doi:10.1137/18M1171321}}.
\newline\urlprefix\url{https://doi.org/10.1137/18M1171321}

\bibitem{DBLP:journals/jcss/CharikarGTS02}
M.~Charikar, S.~Guha, {\'{E}}.~Tardos, D.~B. Shmoys,
  \href{https://doi.org/10.1006/jcss.2002.1882}{A constant-factor approximation
  algorithm for the k-median problem}, J. Comput. Syst. Sci. 65~(1) (2002)
  129--149.
\newblock \href {https://doi.org/10.1006/jcss.2002.1882}
  {\path{doi:10.1006/jcss.2002.1882}}.
\newline\urlprefix\url{https://doi.org/10.1006/jcss.2002.1882}

\bibitem{DBLP:books/daglib/0015106}
J.~M. Kleinberg, {\'{E}}.~Tardos, Algorithm design, Addison-Wesley, 2006.

\bibitem{DBLP:books/daglib/0004338}
V.~V. Vazirani,
  \href{http://www.springer.com/computer/theoretical+computer+science/book/978-3-540-65367-7}{Approximation
  algorithms}, Springer, 2001.
\newline\urlprefix\url{http://www.springer.com/computer/theoretical+computer+science/book/978-3-540-65367-7}

\bibitem{DBLP:journals/tit/CicaleseL21}
F.~Cicalese, E.~S. Laber,
  \href{https://doi.org/10.1109/TIT.2020.3031629}{Information theoretical
  clustering is hard to approximate}, {IEEE} Trans. Inf. Theory 67~(1) (2021)
  586--597.
\newblock \href {https://doi.org/10.1109/TIT.2020.3031629}
  {\path{doi:10.1109/TIT.2020.3031629}}.
\newline\urlprefix\url{https://doi.org/10.1109/TIT.2020.3031629}

\end{thebibliography}

\end{document}